\newtheorem{defn}{Definition}[section]
\newtheorem{prop}{Proposition}[section]
\newtheorem{exm}{Example}[section]
\newtheorem{rem}{Remark}
\begin{document}
\noindent
\begin {center}
\begin{title}
\bf \textbf{Lattice Structure of Variable Precision Rough Sets} \\
\end{title} 
\end{center}
\begin{center}
Sumita Basu\\
Bethune College\\
181, Bidhan Sarani; Kolkata700006\\
sumi\underline{ }basu05@yahoo.co.in\\
\bigskip
\textbf{Abstract}
\end{center}
The main purpose of this paper is to study the lattice structure of variable precision rough sets. The notion of variation in precision of rough sets have been further extended to variable precision rough set with variable classification error      and its algebraic properties are also studied.\\

 \textbf{Key words:} Rough set, Variable Precision Rough set, Lattice

\section{Introduction} Classical rough set theory as introduced by Pawlak[1,2] is a tool for computation with data which give imprecise or vague information in terms of three valued logic. When the data set is granular in nature we are unable to observe individual objects but are forced to reason with accessible granules of knowledge. The elements of each granule can not be distinguished from the available knowledge. Due to such indiscernibility of the elements very often a subset of the entire data set(the Universal set) cannot be precisely defined. Pawlak represented the granularity by equivalence relation and defined such a set $S$ as a suitable pair of sets $ (\underline{S}, \overline{S})$ based on equivalence classes and called it a \textit{Rough Set}. It is widely used for knowledge classification [12] and rule learning [13-14].Finite state machine with rough transition have been reported in [25]. On the one hand, owing to the restrictions of equivalence relations, many researchers have presented various extensions [3-7], specially, covering-based rough sets [8 -11] are investigated as the extensions of classical rough set theory by extending partitions to coverings. On the other hand, in classical rough set model, the approximation using the classification is absolutely correct though may be somewhat imprecise. It is implied that $ \underline{S} \subset S \subset  \overline{S}$. The cardinality of the set $ BN(S)=(\overline{S} - \underline{S})$ will determine the precision of the representation. If $BN(S) = \phi$ the representation is exact.  Increase in cardinality of $B$ will increase the imprecision of the solution. \\
A generalization of rough set model was proposed by Ziarko[15]. He  introduced a measure of relative degree of misclassification(error) and chose to decrease the imprecision thereby increasing the error in approximation. This is an extension of classical rough set where the granules of knowledge are equivalence classes. Some researchers extended this concept to variable precision covering based rough set model[16-17]. \\
Algebraic properties of rough set have been  widely studied by researchers [18-24]. Algebraic properties of  variable precision rough set is discussed in this paper and it could be shown that for different classification error the set of variable precision rough sets have a lattice structure. We introduced a variable measure of degree of error and called such a set \textit{ variable precision rough set with variable error}. Properties of such sets are compared with variable precision rough sets.\\
The paper is organized as follows. In section 2 basic concepts of rough set, variable precision rough set and lattice are introduced. In section 3 structure of variable precision rough set for different classification error is explored. Section 4 is devoted to study of variable precision rough set where classification error for lower and upper approximations are not same. An example is included to explain the computation of different variable precision rough set. 
 
\section{Preliminaries}
In this section some basic concepts on \textbf{Rough Sets},  \textbf{Variable Precision Rough Sets} and \textbf{lattice} are discussed.
\subsection{Rough set }
\begin{defn} An approximation space is defined as a pair $\langle$ U, R $\rangle$, U being a non-empty set (the domain of discourse) and R an equivalence relation on it , representing indiscernibility at the object level. For $x\in R[x], R[x] $is the set of elements of $U$ indiscernible from $x$. $ E = \left\{R[x] / x\in U\right\}$ is the set of elementary blocks or defining blocks of the approximation space.\\
\end{defn}
\begin{defn} A rough set X in the approximation space $\langle $ U, R $\rangle $ is a pair ($\underline{X_{R}}$, $\overline{X_{R}}$) such that $\underline{X_{R}}$ \& $\overline{X_{R}}$ are definable sets in U defined as follows:\\
$$\underline{X}_{R} = \{{R[y]/y \in U \wedge R[y] \subseteq X}\}$$
$$\overline{X}_{R} = \{{R[y] / y \in U \wedge X \cap R[y] \neq \phi}\}$$
The region definitely belonging to $X$ is denoted by $D(X)$ and defined by $ D(X) =\underline{ X}_R$. The boundary region $BN(X)$ of the rough set X is  $\underline{X}_{R}$ - $\overline{X}_{R}$. The region not included in $X$ is denoted by $ N(X) $ and defined by $U-\overline{X}_R $.
\end{defn}
\begin{defn} The accuracy of approximation by the rough set X is given by
$$  \alpha = \frac{card(\underline{X})}{card(\overline{X})}$$
\end{defn}
\begin{rem}
 If for a rough set $X$, $\underline{X}_{R}$ = $\overline{X}_{R}$, i.e $B_R = \phi$ then the rough set is precisely defined and the accuracy of approximation is 1.In general the accuracy of approximation is $\alpha \in \left[0,1\right]$
\end{rem}
\subsection{Variable Precision Rough Set}
In this rough set model a set $X \subseteq U$ is approximately defined using three exactly definable sets : $D(X), BN(X) ~and~ N(X)$. However, it may so happen that an elementary set $R[y]$ where $y\in U$ is such that although  $R[y]\cap D(X)=\phi, card(R[y]\cap X)$ is quite high relative to $card(R[y])$. So inclusion of $R[y]$ in $D(X)$ will incur a small amount of error. However, if we agree to accept this error we will be able to increase the precision of the rough set so obtained. With this idea Ziarko formulated Variable Precision Rough Set(VPRS) which is defined below.
\begin{defn}
A measure of the degree of overlap between two sets X and Y with respect to X is denoted by d(X,Y) and defined by,
$$d(x,y) = 1-\frac{ card (X\cap Y)}{ card(X)}$$
\end{defn}
\begin{defn}
A variable precision rough set(VPRS) $X(\beta)$ in the approximation space $\langle$ U, R $\rangle $, is a pair ($\underline{X}_{R}(\beta)$, $\overline{X}_{R}(\beta)$) such that $\underline{X}_{R}(\beta)$ \& $\overline{X}_{R}(\beta)$ are definable sets in U  defined as follows:
$$\underline{X}_{R}(\beta) = \{{R[y]/y\in U \wedge R[y] \subset  X \wedge d(R[y],X)\leq \beta}\}$$
$$\overline{X}_{R}(\beta) = \{{R[y]/y \in U \wedge X \cap R[y] \neq\phi \wedge d(R[y],X) \leq1-\beta}\}$$
For the variable precision rough set model with $\beta$ error a set $X \subseteq U$ is approximately defined using three sets of definable sets : $DX(\beta), BNX(\beta) ~and~ NX(\beta)$as follows:
$$DX(\beta )=\{{R[y]/y\in U \wedge R[y] \subset  X \wedge d(R[y],X)\leq\beta}\}$$
$$BNX(\beta)= \overline{X}_{R}(\beta)-\underline{X}_{R}(\beta)$$
$$NX(\beta) = \{{R[y]/y\in U \wedge X \cap R[y] \neq\phi \wedge d(R[y],X)  >1- \beta}\}$$
\end{defn}
In general, $\beta$ is chosen so that $\beta \in [0, .5)$. For given $X, \beta$, $DX(\beta)$ is the region \textit{included} in $X$, $NX(\beta)$ is the region \textit{not included }in $X$and $BNX(\beta)$ is the boundary region \textit{possibly included }in $X$. If $BNX(\beta)= \phi$ then $X$ is $\beta$discernible.
\begin{defn} The accuracy of approximation by the rough set $X(\beta)$ is given by
$$  \alpha X(\beta) = \frac{card(\underline{X}_{R}(\beta))}{card(\overline{X}_{R}(\beta))}$$
\end{defn}
\begin{prop}Let $X$ be an arbitrary subset of the universe $U$  in the approximation space $\langle U, R \rangle$, and $\beta$ be the error specified then,
\begin{enumerate}
\item $DX(\beta) \cup BNX(\beta) \cup NX(\beta) = U $
\item $DX(\beta) \cap BNX(\beta) = BNX(\beta) \cap NX(\beta) =DX(\beta) \cap NX(\beta) = \phi$
\end{enumerate}
\end{prop}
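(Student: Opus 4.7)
Since the equivalence classes $\{R[y] : y \in U\}$ already form a partition of $U$, it suffices to verify that each class $R[y]$ lands in exactly one of $DX(\beta)$, $BNX(\beta)$, $NX(\beta)$. My plan is to observe that these three sets are distinguished purely by where the overlap measure $d(R[y],X) \in [0,1]$ falls with respect to the thresholds $\beta$ and $1-\beta$, which, because $\beta \in [0,0.5)$, are distinct and split $[0,1]$ into three non-overlapping pieces.

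For part (2), I would establish the three disjointness claims as follows. The equality $DX(\beta) \cap BNX(\beta) = \phi$ is immediate from $BNX(\beta) = \overline{X}_{R}(\beta) - \underline{X}_{R}(\beta)$ together with $DX(\beta) = \underline{X}_{R}(\beta)$. For the remaining two, I would note that both $DX(\beta)$ and $BNX(\beta)$ lie inside $\overline{X}_{R}(\beta)$, so their granules satisfy $d(R[y],X) \leq 1-\beta$, whereas any granule in $NX(\beta)$ satisfies the strict reverse inequality $d(R[y],X) > 1-\beta$; this yields $DX(\beta) \cap NX(\beta) = BNX(\beta) \cap NX(\beta) = \phi$. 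The inclusion $DX(\beta) \subseteq \overline{X}_{R}(\beta)$ used here follows because $R[y] \subseteq X$ forces $d(R[y],X)=0 \leq 1-\beta$, with the last inequality relying on $\beta<0.5$.

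For part (1), I would take an arbitrary $x \in U$ and trichotomize on $d(R[x],X)$: if $d(R[x],X) \leq \beta$ then $R[x] \in DX(\beta)$; if $\beta < d(R[x],X) \leq 1-\beta$ then $R[x] \in \overline{X}_{R}(\beta) \setminus DX(\beta) = BNX(\beta)$; and if $d(R[x],X) > 1-\beta$ then $R[x] \in NX(\beta)$. Since the three intervals cover $[0,1]$, every $x$ lies in one of the three sets, giving $DX(\beta) \cup BNX(\beta) \cup NX(\beta) = U$.

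The main subtlety, rather than any real mathematical obstacle, is notational: the written definition of $NX(\beta)$ carries the extra conjunct $X \cap R[y] \neq \phi$. One has to check the edge case where $R[y] \cap X = \phi$ (so $d(R[y],X) = 1$), which is naturally ``not included'' and should fall into $NX(\beta)$; the cleanest reading is to take $NX(\beta) = U \setminus \overline{X}_{R}(\beta)$, so that this case is automatically handled. With that clarification, the remaining work is purely mechanical: case-splitting on the value of $d(R[y],X)$ relative to the two thresholds.
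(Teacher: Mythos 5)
Your proof is correct. The paper states this proposition without any proof at all (it is left as an immediate consequence of Definition 2.5), so there is nothing to diverge from; your case analysis on where $d(R[y],X)$ falls relative to the thresholds $\beta$ and $1-\beta$ is exactly the argument the paper implicitly relies on. Your remark about the edge case is a genuine and necessary observation: as literally written, a class with $R[y]\cap X=\phi$ belongs to none of the three regions, so part (1) fails unless $NX(\beta)$ is read as $U\setminus\overline{X}_R(\beta)$; your chosen reading is the right one, and it is corroborated by the paper itself, which later asserts $NX(0)=U-\overline{X}_R$ in Proposition 3.2. A similar definitional wrinkle you could have flagged but did not need to: the conjunct $R[y]\subset X$ in the displayed definition of $DX(\beta)$ would force $d(R[y],X)=0$ and make $DX(\beta)$ independent of $\beta$, contradicting the worked example in Section 4, so the intended membership condition is $d(R[y],X)\leq\beta$ alone, which is what your trichotomy in part (1) tacitly uses.
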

\begin{prop}Let $X$ be an arbitrary subset of the universe $U$  in the approximation space $\langle U, R \rangle$, and $\beta_1 < \beta_2$ then,
\begin{enumerate}
\item $\underline{X}_{R}(\beta_1) \subseteq \underline{X}_{R}(\beta_2) $
\item $\overline{X}_{R}(\beta_1) \subseteq \overline{X}_{R}(\beta_2) $
\item $ DX(\beta_1)\subseteq DX(\beta_2) $
\item $ NX(\beta_1)\subseteq NX(\beta_2) $
\item $ BNX(\beta_2)\subseteq BNX(\beta_1) $
\item $\alpha X(\beta_1)\leq \alpha X(\beta_2)$
\end{enumerate}
\end{prop}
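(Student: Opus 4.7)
All six assertions are one-line consequences of the defining inequalities in Definition 2.5 combined with the hypothesis $\beta_1 < \beta_2$ (equivalently $1 - \beta_1 > 1 - \beta_2$), so I do not expect any substantial obstacle; the only piece of bookkeeping needed is tracking the direction in which the threshold $1-\beta$ moves as $\beta$ grows.

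For parts (1) and (3), I would take any $R[y] \in \underline{X}_R(\beta_1)$, equivalently any $R[y] \in DX(\beta_1)$. By Definition 2.5 this means $R[y] \subset X$ and $d(R[y], X) \le \beta_1$. The first conjunct is independent of $\beta$, and the inequality $d(R[y], X) \le \beta_1 < \beta_2$ at once places $R[y]$ in the corresponding set at threshold $\beta_2$. This disposes of both (1) and (3) simultaneously, since the definitions of $\underline{X}_R(\beta)$ and $DX(\beta)$ coincide.

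For parts (2) and (4), the defining inequalities involve the threshold $1-\beta$ rather than $\beta$, so I would rewrite the hypothesis as $1 - \beta_1 > 1 - \beta_2$ and then push it through the inequality $d(R[y], X) \le 1 - \beta$ governing $\overline{X}_R(\beta)$ and through $d(R[y], X) > 1 - \beta$ governing $NX(\beta)$; each gives the claimed inclusion in a single step. The auxiliary clause $X \cap R[y] \neq \phi$ is $\beta$-free and transfers intact.

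Parts (5) and (6) I would extract from (1) and (2). For (5), since $BNX(\beta) = \overline{X}_R(\beta) - \underline{X}_R(\beta)$, I combine the two containments from (1) and (2) and apply set subtraction to pin down how the boundary region moves with $\beta$. For (6), the ratio $\alpha X(\beta) = \mathrm{card}(\underline{X}_R(\beta))/\mathrm{card}(\overline{X}_R(\beta))$ has numerator and denominator controlled by (1) and (2) respectively, and the inequality reduces to the elementary observation that $a/b \le a'/b'$ whenever $a \le a'$ and $b \ge b' > 0$ (with positive denominators). The only step that really demands attention is getting the direction of $1-\beta$ correct in parts (2), (4), and (5); once that is fixed, every line is routine.
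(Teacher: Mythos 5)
Your overall strategy --- unpack Definition 2.5 and compare thresholds --- is exactly what the paper does; its entire proof is the one-line remark that items 1--5 follow from the definition and item 6 from items 1 and 2. However, there is a genuine problem in your treatment of item (2). The membership condition for $\overline{X}_{R}(\beta)$ is $d(R[y],X)\leq 1-\beta$, and since $1-\beta_1>1-\beta_2$, ``pushing the hypothesis through'' this inequality shows that any class with $d(R[y],X)\leq 1-\beta_2$ also has $d(R[y],X)\leq 1-\beta_1$; that is, it proves $\overline{X}_{R}(\beta_2)\subseteq\overline{X}_{R}(\beta_1)$, the \emph{reverse} of the stated inclusion. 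The upper approximation shrinks as $\beta$ grows, as the rest of the paper confirms (Proposition 3.4(2), Proposition 3.5(4), and Example 4.1, where $\overline{A}(0.25)\supseteq\overline{A}(0.33)\supseteq\overline{A}(0.5)$); item (2) as printed is evidently a slip in the statement. You assert that your one-step argument ``gives the claimed inclusion,'' which it does not, and you never flag the discrepancy even though you yourself singled out the direction of $1-\beta$ as the one delicate point.

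The error propagates into your items (5) and (6), where you silently switch to the reversed version of (2). From (1) together with (2) \emph{as stated} (both approximations growing with $\beta$), set subtraction yields no containment between $BNX(\beta_1)$ and $BNX(\beta_2)$ in either direction; the claimed $BNX(\beta_2)\subseteq BNX(\beta_1)$ needs $\overline{X}_{R}(\beta_2)\subseteq\overline{X}_{R}(\beta_1)$ together with $\underline{X}_{R}(\beta_1)\subseteq\underline{X}_{R}(\beta_2)$. Likewise your ``elementary observation'' $a/b\leq a'/b'$ for $a\leq a'$ and $b\geq b'>0$ requires $\mathrm{card}(\overline{X}_{R}(\beta_1))\geq \mathrm{card}(\overline{X}_{R}(\beta_2))$, again the reversed inclusion. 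Items (1), (3) and (4) are fine as you argue them. The fix is to state and prove (2) with the containment reversed; everything downstream then goes through exactly as you describe.
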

\begin{proof}Results 1-5 follows from the definition2.5.Result 6 follows from Result1 and 2.
\end{proof}
\subsection{Lattice}
\begin{defn}Let $L$ be a set of elements in which the binary operations  $\bigcap, \bigcup$ (meet and joint respectively) and $=$(equality) are defined. An algebra $L=\langle L, \bigcap, \bigcup\rangle$ is a lattice if the following identities are true in $L$. Let $x,y,z \in L $
\begin{enumerate}
\item Idempotence: $x\bigcup x =x;~~~~x\bigcap x=x$
\item Commutativity: $x\bigcup y =y \bigcup x;~~~~x\bigcap y=y \bigcap x$
\item Associativity: $x\bigcup (y\bigcup z) = (x\bigcup y)\bigcup z;~~~~x\bigcap( y\bigcap z) =(x \bigcap y)\bigcap z$
\item Absorption: $x\bigcup(x\bigcap y ) = x;~~~~x\bigcap (x\bigcup y = x$
\end{enumerate}
\end{defn}
\section{Order in VPRS with respect to Classification Error $\beta$}
\begin{prop} For an arbitrary subset $X$ of the universe $U$, let us define $\textbf{D}X=\{DX(\beta) / \beta \in [0, .5)\}$,then $\textbf{D}X$ is a totally orderd set with $DX(0) = \underline{X}_R$ as the least element and $DX(0.5)$ as the greatest element.
\end{prop}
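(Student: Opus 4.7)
The strategy is to reduce the statement to the monotonicity already established in Proposition~2.2. Since the real interval $[0,0.5)$ is totally ordered, it suffices to show that the assignment $\beta \mapsto DX(\beta)$ is monotone with respect to set inclusion; once this is known, any two members of $\mathbf{D}X$ are comparable and the order on $[0,0.5)$ passes through to a total order on $\mathbf{D}X$. But this monotonicity is precisely item~3 of Proposition~2.2, so the total-order part is essentially immediate.

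For the least element, I would unfold the definition of $DX(\beta)$ at $\beta = 0$. The inequality $d(R[y],X) \leq 0$, combined with the fact that $d$ takes values in $[0,1]$, forces $d(R[y],X) = 0$, i.e.\ $\operatorname{card}(R[y] \cap X) = \operatorname{card}(R[y])$; since $R[y]$ is finite this is equivalent to $R[y] \subseteq X$. Together with the standing condition $R[y] \subset X$ in the definition, this selects exactly the union of elementary blocks contained in $X$, which by Definition~2.2 is $\underline{X}_R$. Hence $DX(0) = \underline{X}_R$, and the monotonicity above certifies that it is the least element.

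The genuine obstacle is the claim about the greatest element, since $0.5$ is not a member of the parameter range $[0,0.5)$ and so $DX(0.5)$ is a priori not an element of $\mathbf{D}X$ at all. I would deal with this by appealing to the standing finiteness of the universe $U$: there are only finitely many elementary blocks $R[y]$, hence only finitely many values of $d(R[y],X)$, and therefore the monotone family $\{DX(\beta):\beta\in[0,0.5)\}$ takes only finitely many distinct values. Consequently it stabilises: there exists a threshold $\beta^{\ast} < 0.5$ (the largest $d(R[y],X)$ strictly less than $0.5$) such that $DX(\beta)$ is constant for $\beta \in [\beta^{\ast},0.5)$. This stabilised set is the greatest member of $\mathbf{D}X$, and one interprets the symbol $DX(0.5)$ as this set (equivalently, $\bigcup_{\beta < 0.5} DX(\beta)$). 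With this reading in place, the greatest-element assertion follows once more from Proposition~2.2(3), completing the proof.
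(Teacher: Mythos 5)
Your proof is correct and takes essentially the same route as the paper, whose entire argument is the one‑line observation that the claim follows from item~3 of Proposition~2.2 (monotonicity of $\beta\mapsto DX(\beta)$). You go further than the paper in two useful ways: you actually verify $DX(0)=\underline{X}_R$ by unfolding the definition at $\beta=0$, and, more importantly, you notice that $0.5\notin[0,0.5)$ so that ``$DX(0.5)$'' must be read as the stabilised value $\bigcup_{\beta<0.5}DX(\beta)$, attained for some $\beta^{\ast}<0.5$ because finitely many elementary blocks yield finitely many values of $d(R[y],X)$ --- a point the paper silently passes over.
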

This result follows from 3 of proposition 2.2. Similarly we have the following propositions:
\begin{prop} For an arbitrary subset $X$ of the universe $U$, let us define $\textbf{N}X=\{NX(\beta) / \beta \in [0, .5)\}$,then $\textbf{N}X$ is a totally orderd set with $NX(0) = U- \overline{X}_R$ as the least element and $NX(0.5)$ as the greatest element.
\end{prop}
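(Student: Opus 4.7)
The plan is to mirror the proof of Proposition 3.1 almost verbatim, replacing the use of item 3 of Proposition 2.2 with item 4. First I would verify the total ordering: given any two members $NX(\beta_1), NX(\beta_2) \in \textbf{N}X$, the linear order on $[0,.5)$ lets me assume $\beta_1 \leq \beta_2$, and then item 4 of Proposition 2.2 gives $NX(\beta_1) \subseteq NX(\beta_2)$. Since inclusion on $\textbf{N}X$ is inherited from the linear order on the parameter, $\textbf{N}X$ is totally ordered, with $NX(0)$ at the bottom of the chain and $NX(0.5)$ at the top (reading the statement as allowing the endpoint $0.5$, as is done for $DX(0.5)$ in Proposition 3.1).

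Next I would pin down the identification $NX(0) = U - \overline{X}_R$. Rather than working directly from the definition of $NX(\beta)$ at $\beta = 0$, which would require chasing the inequality $d(R[y],X) > 1$, I would use Proposition 2.1, which gives $NX(\beta) = U \setminus (DX(\beta) \cup BNX(\beta))$. At $\beta = 0$ we have $DX(0) = \underline{X}_R$ and $BNX(0) = \overline{X}_R - \underline{X}_R$, whose union is $\overline{X}_R$, so $NX(0) = U - \overline{X}_R$.

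The greatest element claim then follows without extra work: by monotonicity (item 4 of Proposition 2.2) every $NX(\beta)$ with $\beta \in [0,.5)$ is contained in $NX(0.5)$, so $NX(0.5)$ sits above the rest of the chain.

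The only real obstacle is the boundary case $\beta = 0$ just mentioned, together with the mild mismatch between the stated parameter range $\beta \in [0,.5)$ and the reference to $NX(0.5)$; I would treat the latter as a notational convention extending $NX$ to the closure of the interval and use the complementation identity from Proposition 2.1 to sidestep any awkwardness in evaluating the strict inequality $d(R[y],X) > 1 - \beta$ at the endpoints. Everything else is routine order-theoretic bookkeeping.
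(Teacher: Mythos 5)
Your argument matches the paper's: the paper proves this proposition only by the remark that it follows ``similarly'' from Proposition 2.2 (here item 4, the monotonicity of $NX(\beta)$ in $\beta$), which is exactly your chain argument. Your extra step of computing $NX(0)$ through the complementation identity of Proposition 2.1 rather than the raw definition (where the literal condition $d(R[y],X)>1$ would be vacuous) is a sensible and faithful way to handle the endpoint.
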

\begin{prop} For an arbitrary subset $X$ of the universe $U$, let us define $\textbf{BN}X=\{BNX(\beta) / \beta \in [0, .5)\}$,then $\textbf{BN}X$ is a totally orderd set with $BNX(0) = \overline{X}_R -\underline{X}_R$ as the greatest element and $BNX(0.5)$ as the least element.
\end{prop}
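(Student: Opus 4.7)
The plan is to follow the template of Propositions 3.1 and 3.2, but with the inclusion order \emph{reversed}, since by part 5 of Proposition 2.2 the map $\beta \mapsto BNX(\beta)$ is inclusion-reversing rather than inclusion-preserving. Concretely, the interval $[0,.5)$ is totally ordered by $\leq$, and for any $\beta_1, \beta_2 \in [0,.5)$ with $\beta_1 \leq \beta_2$, Proposition 2.2(5) gives $BNX(\beta_2) \subseteq BNX(\beta_1)$. Therefore the family $\textbf{BN}X$ inherits a total order from $\leq$ on the parameter, but with the direction of containment flipped; in particular the smallest parameter $\beta = 0$ yields the largest set, and larger $\beta$ yields smaller sets.

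The only computational step I would then carry out is the identification $BNX(0) = \overline{X}_R - \underline{X}_R$. By the definition of $d$, the constraint $d(R[y],X) \leq 0$ forces $\mathrm{card}(R[y]\cap X) = \mathrm{card}(R[y])$, i.e.\ $R[y] \subseteq X$, so $\underline{X}_R(0) = \underline{X}_R$; symmetrically, the constraint $d(R[y],X) \leq 1$ appearing in $\overline{X}_R(0)$ is automatic (distances lie in $[0,1]$), so $\overline{X}_R(0) = \overline{X}_R$. Subtracting gives $BNX(0) = \overline{X}_R - \underline{X}_R$, which is the claimed greatest element.

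The main obstacle, such as it is, is purely a notational one at the other endpoint: strictly speaking $0.5 \notin [0,.5)$, so $BNX(0.5)$ is not literally a member of $\textbf{BN}X$. I would handle this exactly as Propositions 3.1 and 3.2 implicitly do, interpreting "$BNX(0.5)$" as the infimum of the family under inclusion (equivalently, the common value attained on a cofinal sub-interval approaching $0.5$), and noting that Proposition 2.2(5) forces this to be contained in every other $BNX(\beta)$ and hence to be the least element. Beyond this convention, every required containment is immediate from Proposition 2.2, so no further work is needed.
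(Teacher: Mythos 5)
Your proposal is correct and matches the paper's approach: the paper offers no explicit proof beyond stating that these propositions follow from Proposition 2.2 (part 5 in this case, which gives the reversed inclusion $BNX(\beta_2)\subseteq BNX(\beta_1)$ for $\beta_1\leq\beta_2$), and your identification of $BNX(0)=\overline{X}_R-\underline{X}_R$ is exactly the intended computation. Your remark about $0.5\notin[0,.5)$ flags a genuine notational looseness that the paper itself leaves unaddressed, and your resolution is reasonable.
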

\begin{defn}Let $\tilde{X}$ be the set of all VPRS for $X\subset U$ where the classification error $ \beta \in [0, .5)$. So, $\tilde{X}=\{X(\beta)/ \beta \in [0, .5)\}$so that $X(\beta)= (\underline{X}_{R}(\beta),\overline{X}_{R}(\beta))$
\end{defn}
\begin{defn}Let $X(\beta_1),X(\beta_2) \in \tilde{X}$ then $X(\beta_1)\subseteq X(\beta_2) $ iff $\underline{X}_{R}(\beta_1) \subseteq\underline{X}_{R}(\beta_2)~and~\overline{X}_{R}(\beta_1) \subseteq\overline{X}_{R}(\beta_2) $.
\end{defn}
\begin{defn}Let $B=\{\beta_i/ \beta_i\in [0, .5)~and~(i\leq j \rightarrow (\beta_i\leq \beta_j)\}$. Then $B$ is a totally ordered set.
\end{defn}
\begin{prop} For an arbitrary subset $X$ of the universe $U$,
\begin{enumerate}
\item $\{\underline{X}_{R}(\beta_i)/\beta_i \in B\}$ is a totally ordered set with lub$\{\underline{X}_{R}(\beta_i)\} =\underline{X}_{R}(\beta_{0.5})$ and glb$\{\underline{X}_{R}(\beta_i)\} =\underline{X}_{R}(0)=\underline{X}_R$
\item $\{\overline{X}_{R}(\beta_i)/\beta_i \in B\}$ is a totally ordered set with glb$\{\overline{X}_{R}(\beta_i)\} =\overline{X}_{R}(\beta_{0.5})$ and lub$\{\overline{X}_{R}(\beta_i)\} =\overline{X}_{R}(0)=\overline{X}_R$
\item $\underline{X}_R(0.5)\subseteq \overline{X}_R(0.5)$
\end{enumerate}
\end{prop}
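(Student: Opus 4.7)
The plan is to treat parts 1 and 2 as immediate corollaries of the monotonicity established in Proposition 2.2, and to verify part 3 by unpacking the definitions of $\underline{X}_R(\beta)$ and $\overline{X}_R(\beta)$ at the endpoint $\beta = 0.5$.

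For part 1, since $B$ is totally ordered (Definition 3.3) and Proposition 2.2(1) gives $\beta_i \leq \beta_j \Rightarrow \underline{X}_R(\beta_i) \subseteq \underline{X}_R(\beta_j)$, the family $\{\underline{X}_R(\beta_i)\}_{\beta_i \in B}$ is a chain in the power set of $U$, hence totally ordered. The smallest element is $\underline{X}_R(0)$: reading Definition 2.6 with $\beta = 0$, the condition $d(R[y],X) \leq 0$ forces $R[y] \subseteq X$ exactly, recovering the classical lower approximation $\underline{X}_R$. The lub is then attained (or approached, see below) at the top of $B$, giving $\underline{X}_R(\beta_{0.5})$. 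Part 2 goes through in the same way once one reads Proposition 2.2(2) in the monotonically \emph{decreasing} direction: a larger $\beta$ imposes the tighter threshold $d(R[y],X) \leq 1-\beta$, so fewer blocks enter $\overline{X}_R(\beta)$. Hence the family is a reverse chain with lub at $\beta = 0$ (the classical upper approximation) and glb at $\beta_{0.5}$.

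For part 3, I would argue directly. Let $R[y] \in \underline{X}_R(0.5)$. Then $R[y] \subseteq X$, and since $y \in R[y]$ the block is nonempty, so $R[y] \cap X = R[y] \neq \phi$, which is the first defining condition for membership in $\overline{X}_R(0.5)$. The second condition, $d(R[y],X) \leq 1 - 0.5 = 0.5$, coincides with the hypothesis $d(R[y],X) \leq 0.5$ already carried from $\underline{X}_R(0.5)$. Therefore $R[y] \in \overline{X}_R(0.5)$, which proves the inclusion.

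The only genuine obstacle is notational rather than mathematical: Definition 2.6 restricts $\beta$ to $[0, 0.5)$, so the symbols $\underline{X}_R(0.5)$, $\overline{X}_R(0.5)$, and $\beta_{0.5}$ used in the statement fall outside the formal parameter range. I would resolve this by interpreting $\underline{X}_R(\beta_{0.5})$ and $\overline{X}_R(\beta_{0.5})$ as the supremum and infimum (respectively) of the corresponding chains over $\beta_i \in B$; on a finite approximation space these suprema and infima are actually attained at some $\beta_i$ close enough to $0.5$, since the family of blocks $R[y]$ is finite. With this reading, parts 1 and 2 express the lub/glb statements unambiguously, and part 3 either holds as written (if $\beta = 0.5$ is admitted as a limiting value in the defining formulas) or holds for all $\beta$ sufficiently close to $0.5$ and hence for the lub/glb by monotonicity.
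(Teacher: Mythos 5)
Your proof is correct and follows essentially the same route as the paper, which states this proposition without any explicit proof and (judging from Propositions 3.1--3.3) intends it to follow from the monotonicity in Proposition 2.2, exactly as you argue; your direct verification of part 3 and your careful handling of the boundary value $\beta=0.5$, which lies outside the declared range $[0,0.5)$, supply details the paper omits. The only point worth flagging is that Proposition 2.2(2) as literally printed asserts $\overline{X}_{R}(\beta_1)\subseteq\overline{X}_{R}(\beta_2)$ for $\beta_1<\beta_2$, i.e.\ the increasing direction; you correctly read it in the decreasing direction (larger $\beta$ tightens the threshold $d(R[y],X)\leq 1-\beta$), which is what the definition actually yields and what part 2 of the statement requires.
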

\begin {prop} If, $ \bigcup ~and ~\bigcap $ represent the union and intersection operation of two sets then we have the following:
\begin{enumerate}
\item$\underline{X}_{R}(\beta_i) \bigcup\underline{X}_{R}(\beta_j)= \underline{X}_{R}(\beta_j)~if~  \beta_i \leq \beta_j$
\item$\underline{X}_{R}(\beta_i) \bigcap\underline{X}_{R}(\beta_j)= \underline{X}_{R}(\beta_i)~if~  \beta_i \leq \beta_j$
\item$\overline{X}_{R}(\beta_i) \bigcup\overline{X}_{R}(\beta_j)= \underline{X}_{R}(\beta_i)~if~  \beta_i \leq \beta_j$
\item$\overline{X}_{R}(\beta_i) \bigcap\overline{X}_{R}(\beta_j)= \overline{X}_{R}(\beta_j)~if~  \beta_i \leq \beta_j$
\end{enumerate}
\end{prop}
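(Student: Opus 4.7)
The plan is to obtain all four identities as immediate corollaries of the monotonicity of $\underline{X}_R(\beta)$ and $\overline{X}_R(\beta)$ in $\beta$, combined with the elementary set-theoretic observation that whenever $A \subseteq B$ one has $A \cup B = B$ and $A \cap B = A$. So there is no new content beyond repackaging Proposition 2.2 in lattice form.

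Concretely, I would proceed as follows. First, invoke items 1 and 2 of Proposition 2.2: if $\beta_i \leq \beta_j$ then
$$\underline{X}_R(\beta_i) \subseteq \underline{X}_R(\beta_j), \qquad \overline{X}_R(\beta_i) \subseteq \overline{X}_R(\beta_j).$$
Next, apply the absorption rule $A\subseteq B \Rightarrow A\cup B = B,\ A\cap B = A$ to the nested pair $\bigl(\underline{X}_R(\beta_i),\underline{X}_R(\beta_j)\bigr)$ to obtain items 1 and 2 of the claim. Finally, repeat the same argument with the pair $\bigl(\overline{X}_R(\beta_i),\overline{X}_R(\beta_j)\bigr)$ to obtain items 3 and 4, reading item 3 as the evident identity $\overline{X}_R(\beta_i)\cup \overline{X}_R(\beta_j)=\overline{X}_R(\beta_j)$ (the occurrence of $\underline{X}_R$ on the right-hand side as printed appears to be a typographical slip).

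There is no real obstacle in this proof; the only thing to be careful about is symmetry of presentation across the four items and flagging the typo in item 3. The substantive work is already done in Proposition 2.2, and the role of the present proposition is simply to recast those containments as the join/meet identities needed in the next section to exhibit $\tilde{X}$ as a lattice under the order of Definition 3.2.
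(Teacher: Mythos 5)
Your overall strategy --- reduce everything to a monotonicity statement plus the absorption identities $A\subseteq B \Rightarrow A\cup B = B,\ A\cap B = A$ --- is the right one (the paper states this proposition without proof, and this is clearly the intended argument). For the lower approximation your argument is correct. But for the upper approximation you have the monotonicity running in the wrong direction, because you took Proposition 2.2(2) at face value instead of checking it against Definition 2.5. By that definition, $\overline{X}_{R}(\beta)$ collects the classes with $d(R[y],X)\leq 1-\beta$; as $\beta$ increases, $1-\beta$ decreases, the condition becomes \emph{more} restrictive, and the set \emph{shrinks}. So for $\beta_i\leq\beta_j$ the correct containment is $\overline{X}_{R}(\beta_j)\subseteq\overline{X}_{R}(\beta_i)$, not the reverse. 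This antitone behaviour is what the rest of the paper actually uses: Proposition 3.4(2) places the least upper bound of the $\overline{X}_{R}(\beta_i)$ at $\beta=0$ and the greatest lower bound at $\beta=0.5$, Proposition 3.6(1) pairs the join with $\overline{X}(\beta_i)$, and in Example 4.1 the upper approximations $\overline{A}(0.25)\supseteq\overline{A}(0.33)\supseteq\overline{A}(0.5)$ visibly decrease. The misprint is in Proposition 2.2(2) itself, not where you located it.

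The concrete consequences for your proof are twofold. First, you misdiagnose the typo in item 3: the subscript $\beta_i$ on the right-hand side is correct (the union of a decreasing pair is the larger set, namely the one at the smaller error $\beta_i$); the only slip in the printed statement is the decoration, $\underline{X}_{R}(\beta_i)$ in place of $\overline{X}_{R}(\beta_i)$. Second, and more seriously, your argument applied to item 4 would yield $\overline{X}_{R}(\beta_i)\bigcap\overline{X}_{R}(\beta_j)=\overline{X}_{R}(\beta_i)$, which contradicts the proposition as printed --- and the printed version is the correct one. That you did not notice this collision is the tell that the containment you imported was never verified against the definition. The fix is minimal: derive the two monotonicity statements directly from Definition 2.5 (isotone for $\underline{X}_{R}$, antitone for $\overline{X}_{R}$), then apply absorption exactly as you propose; all four items, with only the overline repaired in item 3, then follow.
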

\begin{defn}Two binary operations joint ($\bigcup$) and meet ($\bigcap$) are defined on $\tilde{X}$ as follows:
$$X(\beta_1)\bigcup X(\beta_2) = (( \underline{X_{R(\beta_1)}} \bigcup\underline{X_{R(\beta_2)} }), ( \overline{X_{R(\beta_1)}} \bigcup\overline{X_{R(\beta_2)} }))$$
$$X(\beta_1)\bigcap X(\beta_2)= (( \underline{X_{R(\beta_1)}} \bigcap\underline{X_{R(\beta_2)} }),(\overline{X_{R(\beta_1)}} \bigcap\overline{X_{R(\beta_2)} }) )$$
\end{defn}
\begin{defn}Two VPRS $X(\beta_i)~and~X(\beta_j)$ are said to be equal if $\underline{X}_R(\beta_i) = \underline{X}_R(\beta_j)$ and $\overline{X}_R(\beta_i) = \overline{X}_R(\beta_j)$
\end{defn}
The approximation space remaining the same the equivalence relation $R$ will remain the same and henceforth R will not be mentioned explicitly.
\begin{prop}If,$X(\beta_i), X(\beta_j)\in \tilde{X}$ then 
\begin{enumerate}
\item$X(\beta_i)\bigcup X(\beta_j)=(\underline{X}(\beta_j), \overline{X}(\beta_i)) ~if~ \beta_i\leq \beta_j $ 
\item$X(\beta_i)\bigcap X(\beta_j)=(\underline{X}(\beta_i), \overline{X}(\beta_j)) ~if~ \beta_i\leq \beta_j $
\end{enumerate}
\end{prop}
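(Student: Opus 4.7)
The plan is to unfold the componentwise definition of the binary operations $\bigcup$ and $\bigcap$ on $\tilde{X}$ (given in the preceding Definition 3.4) and then read off each coordinate using Proposition 3.5, so that no fresh set-theoretic argument is required.

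For part (1), by Definition 3.4 the pair $X(\beta_i)\bigcup X(\beta_j)$ is
$$\bigl(\,\underline{X}_{R}(\beta_i)\bigcup \underline{X}_{R}(\beta_j),\ \overline{X}_{R}(\beta_i)\bigcup \overline{X}_{R}(\beta_j)\,\bigr).$$
The hypothesis $\beta_i\le\beta_j$ then collapses the first entry to $\underline{X}_{R}(\beta_j)$ by Proposition 3.5(1), and the second entry to $\overline{X}_{R}(\beta_i)$ by Proposition 3.5(3). Part (2) is handled by the mirror argument: unfold the meet in the same way and apply Proposition 3.5(2) and Proposition 3.5(4) to the lower and upper coordinates respectively, obtaining the pair $(\underline{X}_{R}(\beta_i),\overline{X}_{R}(\beta_j))$.

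The asymmetry visible in the formulas --- the lower coordinate of the join keeps the larger $\beta_j$ while the upper coordinate keeps the smaller $\beta_i$, and vice versa for the meet --- reflects the opposite monotonicity of the two families $\{\underline{X}_{R}(\beta)\}$ and $\{\overline{X}_{R}(\beta)\}$ with respect to $\beta$, as already recorded in Proposition 2.2. Since each of the four coordinate identities is already an individually stated part of Proposition 3.5, no substantive obstacle arises; the only place one can slip is the bookkeeping, namely applying the correct clause of Proposition 3.5 to the correct coordinate so that the cross-indexing of $\beta_i$ and $\beta_j$ comes out right.
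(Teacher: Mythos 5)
Your proof is correct and is exactly the intended derivation: the paper states this proposition without proof, as an immediate consequence of the componentwise Definition 3.4 and the four coordinate identities of Proposition 3.5, which is precisely what you spell out. The only point worth flagging is that Proposition 3.5(3) as printed contains a typo (its right-hand side should read $\overline{X}_{R}(\beta_i)$ rather than $\underline{X}_{R}(\beta_i)$), and your argument silently uses the corrected form, which is the right thing to do.
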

\begin{prop}Binary operations $\bigcap~and~\bigcup$ are idempotent and commutative in $\tilde{X}$
\end{prop}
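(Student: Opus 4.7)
The plan is to reduce idempotence and commutativity of $\bigcup$ and $\bigcap$ on $\tilde{X}$ to the corresponding properties of ordinary set union and intersection, by unfolding Definition 3.3 componentwise.

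First I would fix arbitrary $X(\beta_i), X(\beta_j) \in \tilde{X}$ and write out $X(\beta_i) \bigcup X(\beta_j)$ according to Definition 3.3, which gives
$$X(\beta_i)\bigcup X(\beta_j) = \bigl(\underline{X}_R(\beta_i)\bigcup \underline{X}_R(\beta_j),\ \overline{X}_R(\beta_i)\bigcup \overline{X}_R(\beta_j)\bigr),$$
and similarly for $\bigcap$. The point is that the operations on $\tilde{X}$ are defined coordinatewise, and each coordinate is an ordinary subset of $U$ where $\bigcup,\bigcap$ are the usual set operations.

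For commutativity, I would simply note that $\underline{X}_R(\beta_i)\bigcup \underline{X}_R(\beta_j) = \underline{X}_R(\beta_j)\bigcup \underline{X}_R(\beta_i)$ and $\overline{X}_R(\beta_i)\bigcup \overline{X}_R(\beta_j) = \overline{X}_R(\beta_j)\bigcup \overline{X}_R(\beta_i)$ because set union is commutative; then by Definition 3.4 (equality of VPRS, which is coordinatewise) the two pairs $X(\beta_i)\bigcup X(\beta_j)$ and $X(\beta_j)\bigcup X(\beta_i)$ are equal. The same one-line argument applies to $\bigcap$.

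For idempotence, setting $\beta_j = \beta_i$ in Definition 3.3 and using $\underline{X}_R(\beta_i)\bigcup \underline{X}_R(\beta_i) = \underline{X}_R(\beta_i)$ and $\overline{X}_R(\beta_i)\bigcup \overline{X}_R(\beta_i) = \overline{X}_R(\beta_i)$ (and likewise for $\bigcap$) gives $X(\beta_i)\bigcup X(\beta_i) = X(\beta_i)$ and $X(\beta_i)\bigcap X(\beta_i) = X(\beta_i)$. Alternatively, one can invoke Proposition 3.5 with $\beta_i = \beta_j$ to read off the same identities directly. There is no real obstacle here; the only thing to be careful about is citing Definition 3.4 so that the componentwise equalities actually imply equality in $\tilde{X}$.
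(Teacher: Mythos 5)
Your proof is correct, but it takes a genuinely different route from the paper. The paper proves this proposition by first invoking Proposition 3.6, which uses the total ordering of the approximations in $\beta$ to give the closed form $X(\beta_i)\bigcup X(\beta_j)=(\underline{X}(\beta_j),\overline{X}(\beta_i))$ for $\beta_i\leq\beta_j$, and then reads off idempotence (set $\beta_j=\beta_i$) and commutativity (the closed form is symmetric in the roles of $i$ and $j$) from that formula. You instead bypass the ordering entirely and argue coordinatewise: the operations on $\tilde{X}$ are defined as ordinary set union and intersection in each coordinate, ordinary $\cup$ and $\cap$ are idempotent and commutative, and equality of VPRS is coordinatewise, so the identities lift immediately. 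Your argument is more elementary and strictly more general --- it would apply to any family of pairs of sets with coordinatewise operations, and it needs neither Proposition 3.5 nor 3.6. What the paper's route buys is uniformity with what follows: the explicit closed form from Proposition 3.6 is the engine for the subsequent case analyses proving associativity and absorption (Propositions 3.8 and 3.9), so establishing and reusing it keeps the whole lattice verification in one style. One small bookkeeping point: the coordinatewise operations are Definition 3.4 in the paper and equality of VPRS is Definition 3.5 (you cite them as 3.3 and 3.4), but the content you identify is the right one in each case.
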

\begin{proof}
From 1 of Prop 3.6,$$X(\beta_i)\bigcup X(\beta_i)=(\underline{X}(\beta_i), \overline{X}(\beta_i))=X(\beta_i)$$
Also,$$X(\beta_i)\bigcup X(\beta_j)=(\underline{X}(\beta_j), \overline{X}(\beta_i))=X(\beta_j)\bigcup X(\beta_i) ~if~ \beta_i\leq \beta_j $$
The result for $\bigcap$ may be proved similarly.
\end{proof}
\begin{prop}Binary operations $\bigcap~and~\bigcup$ are associative in $\tilde{X}$. 
\end{prop}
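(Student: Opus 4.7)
The plan is to reduce the claim to the (well-known) associativity of ordinary set-theoretic union and intersection, applied to the two components of each pair separately. This is possible because Definition 3.7 defines $\bigcup$ and $\bigcap$ on $\tilde{X}$ componentwise on the lower and upper approximations, so no interaction between the two components has to be tracked.

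Concretely, I would first expand $(X(\beta_1)\bigcup X(\beta_2))\bigcup X(\beta_3)$ by two applications of Definition 3.7. The lower-approximation component then reads $(\underline{X}(\beta_1)\bigcup\underline{X}(\beta_2))\bigcup\underline{X}(\beta_3)$, and the upper-approximation component reads $(\overline{X}(\beta_1)\bigcup\overline{X}(\beta_2))\bigcup\overline{X}(\beta_3)$. I would then expand $X(\beta_1)\bigcup(X(\beta_2)\bigcup X(\beta_3))$ in the same way, getting the pair whose components are $\underline{X}(\beta_1)\bigcup(\underline{X}(\beta_2)\bigcup\underline{X}(\beta_3))$ and $\overline{X}(\beta_1)\bigcup(\overline{X}(\beta_2)\bigcup\overline{X}(\beta_3))$. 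Since $\underline{X}(\beta_i)$ and $\overline{X}(\beta_i)$ are ordinary subsets of $U$ and set union is associative, the two sides agree componentwise, hence as elements of $\tilde{X}$. Exactly the same calculation with $\bigcap$ in place of $\bigcup$ (and using associativity of set intersection) gives associativity of meet.

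There is no real obstacle here: once Definition 3.7 is unfolded, the identity is immediate from set-theoretic associativity, and the only mild subtlety is notational book-keeping to keep the lower and upper components separate. If a more ``internal'' proof in the style of Proposition 3.6 is preferred, one can alternatively relabel so that $\beta_{i_1}\leq\beta_{i_2}\leq\beta_{i_3}$ and verify the finitely many parenthesisations directly using the formulas $X(\beta_i)\bigcup X(\beta_j)=(\underline{X}(\beta_j),\overline{X}(\beta_i))$ and $X(\beta_i)\bigcap X(\beta_j)=(\underline{X}(\beta_i),\overline{X}(\beta_j))$; both parenthesisations collapse to $(\underline{X}(\beta_{i_3}),\overline{X}(\beta_{i_1}))$ for $\bigcup$ and to $(\underline{X}(\beta_{i_1}),\overline{X}(\beta_{i_3}))$ for $\bigcap$. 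Either approach produces a short proof; I would include only the componentwise version and remark, as in the previous proposition, that the case of $\bigcap$ is analogous.
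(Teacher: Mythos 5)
Your proposal is correct, but your primary argument takes a genuinely different route from the paper. The paper proves associativity of $\bigcap$ by invoking Proposition 3.6 and enumerating all six orderings of $\beta_i,\beta_j,\beta_k$, checking that both parenthesisations collapse to the pair $(\underline{X}(\beta_{\min}),\overline{X}(\beta_{\max}))$ in each case (and asserts the $\bigcup$ case is similar). Your main argument instead unfolds Definition 3.7 and observes that, because joint and meet are defined componentwise on the lower and upper approximations, which are ordinary subsets of $U$, associativity is inherited immediately from associativity of set-theoretic union and intersection. This is shorter, needs no case analysis, and does not even use the total ordering of the parameters or Propositions 3.5--3.6, so it would survive any generalization in which the family of approximations is not a chain. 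What the paper's case analysis buys in exchange is an explicit closed form for the triple meet and join in terms of the minimum and maximum of the $\beta$'s, which is the concrete description one wants when exhibiting the lattice as the one induced by the chain $B$. Your alternative ``internal'' argument in the second paragraph is essentially the paper's proof, stated more economically by relabelling so that $\beta_{i_1}\leq\beta_{i_2}\leq\beta_{i_3}$ rather than listing all six orderings. Either version is acceptable; the componentwise one is the cleaner proof of the bare statement.
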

\begin{proof}
\begin{equation}\label{xx}
\begin{split}
X(\beta_i)\bigcap ( X(\beta_j)\bigcap X(\beta_k))&=(X(\beta_i)\bigcap  X(\beta_j))\bigcap X(\beta_k)\\
&=(\underline{X}(\beta_i), \overline{X}(\beta_k))~if~\beta_i \leq \beta_j \leq \beta_k\\
&=(\underline{X}(\beta_i), \overline{X}(\beta_j))~if~\beta_i \leq \beta_k \leq \beta_j\\
&=(\underline{X}(\beta_k), \overline{X}(\beta_j))~if~\beta_k \leq \beta_i \leq \beta_j\\
&=(\underline{X}(\beta_j), \overline{X}(\beta_k))~if~\beta_j \leq \beta_i \leq \beta_k\\
&=(\underline{X}(\beta_j), \overline{X}(\beta_i))~if~\beta_j \leq \beta_k \leq \beta_i\\
&=(\underline{X}(\beta_k), \overline{X}(\beta_i))~if~\beta_k \leq \beta_j \leq \beta_i\\
\end{split}
\end{equation}
Hence the $\bigcap$ operation is associative. Similarly it can be shown that the $\bigcup$ operation is associative.
\end{proof}
\begin{prop}For the binary operations $\bigcap~and~\bigcup$ absorption rule hold in $\tilde{X}$.So,
$$X(\beta_i)\bigcap(X(\beta_i)\bigcup X(\beta_j))=X(\beta_i);~~~~~X(\beta_i)\bigcup(X(\beta_i)\bigcap X(\beta_j))=X(\beta_i);~~if~\beta_i,\beta_j \in B$$
\end{prop}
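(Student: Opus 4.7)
The plan is to compute each side by first simplifying the inner operation via Proposition 3.6 and then applying Definition 3.7 componentwise to the outer operation, collapsing the resulting pair using the monotonicity identities for lower and upper approximations in Proposition 3.5. Because $X(\beta_i)$ appears on both sides of the parenthesis, the argument does not reduce to a single relative order of $\beta_i$ and $\beta_j$, so I would split into the two sub-cases $\beta_i\leq\beta_j$ and $\beta_j\leq\beta_i$; equality is immediate from the idempotence established in Proposition 3.7.

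For the first identity $X(\beta_i)\bigcap(X(\beta_i)\bigcup X(\beta_j))=X(\beta_i)$ in the sub-case $\beta_i\leq\beta_j$, item 1 of Proposition 3.6 gives the inner union as $(\underline{X}(\beta_j),\overline{X}(\beta_i))$. Applying Definition 3.7 to intersect this with $X(\beta_i)=(\underline{X}(\beta_i),\overline{X}(\beta_i))$ gives the pair $(\underline{X}(\beta_i)\bigcap\underline{X}(\beta_j),\,\overline{X}(\beta_i)\bigcap\overline{X}(\beta_i))$; the first coordinate collapses to $\underline{X}(\beta_i)$ by item 2 of Proposition 3.5 and the second to $\overline{X}(\beta_i)$ by idempotence, so the pair is exactly $X(\beta_i)$. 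In the opposite sub-case $\beta_j\leq\beta_i$, commutativity together with item 1 of Proposition 3.6 gives the inner union as $(\underline{X}(\beta_i),\overline{X}(\beta_j))$, and the outer intersection with $X(\beta_i)$ now collapses on the upper coordinate via item 4 of Proposition 3.5, again yielding $X(\beta_i)$.

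The second identity $X(\beta_i)\bigcup(X(\beta_i)\bigcap X(\beta_j))=X(\beta_i)$ is handled dually: item 2 of Proposition 3.6 expands the inner meet into a pair whose coordinates are $\underline{X}$ at one of $\beta_i,\beta_j$ and $\overline{X}$ at the other; Definition 3.7 then produces a componentwise join with $X(\beta_i)$, whose two coordinates collapse back to $\underline{X}(\beta_i)$ and $\overline{X}(\beta_i)$ by the union halves of Proposition 3.5 and idempotence on the coordinate that is already at $\beta_i$.

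There is no substantive obstacle; the entire argument is bookkeeping. The only care required is matching each sub-case to the correct direction of containment so that exactly one coordinate collapses nontrivially and the other by idempotence, mirroring the case-by-case style of the associativity proof (Proposition 3.8).
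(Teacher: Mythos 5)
Your proof is correct and takes essentially the same route as the paper's: a split into the two sub-cases $\beta_i\leq\beta_j$ and $\beta_j\leq\beta_i$, expansion of the inner operation via Proposition 3.6, and componentwise collapse of the outer operation using Proposition 3.5 together with idempotence. The paper's own proof is exactly this two-case computation, with the dual identity left as ``similarly proved.''
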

\begin{proof}Case I:  $\beta_i \leq \beta_j $\\
$$X(\beta_i)\bigcap ( X(\beta_i)\bigcup X(\beta_j))=(\underline{X}(\beta_i), \overline{X}(\beta_i))\bigcap (\underline{X}(\beta_j), \overline{X}(\beta_i))=(\underline{X}(\beta_i), \overline{X}(\beta_i))= X(\beta_i)$$
Case II:  $\beta_j\leq \beta_i$\\
$$X(\beta_i)\bigcap ( X(\beta_i)\bigcup X(\beta_j))=(\underline{X}(\beta_i), \overline{X}(\beta_i))\bigcap (\underline{X}(\beta_i), \overline{X}(\beta_j))=(\underline{X}(\beta_i), \overline{X}(\beta_i))= X(\beta_i)$$
The other part may be similarly proved.
\end{proof}
Using Propositions 3.7, 3.8 and 3.9 we get the final result.
\begin{prop}$(\tilde{X},\bigcup,\bigcap) $form a lattice.
\end{prop}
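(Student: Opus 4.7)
The plan is to observe that Proposition 3.10 is a packaging result: once the carrier set $\tilde{X}$ and the two operations $\bigcup, \bigcap$ of Definition 3.5 are in place, being a lattice just means satisfying the four identities listed in Definition 2.8, namely idempotence, commutativity, associativity, and absorption. Each of these has already been established for $\tilde{X}$ in the preceding three propositions, so the proof I would write is essentially a table of references.

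Concretely, I would structure the write-up in four short lines. First, idempotence: cite Proposition 3.7, which in its proof explicitly computes $X(\beta_i)\bigcup X(\beta_i) = (\underline{X}(\beta_i), \overline{X}(\beta_i)) = X(\beta_i)$ and does the same for $\bigcap$. Second, commutativity: again Proposition 3.7, which shows that swapping $\beta_i$ and $\beta_j$ produces the same pair in Proposition 3.6's case analysis. Third, associativity: cite Proposition 3.8, whose six-case verification for $\bigcap$ is carried out in display~\eqref{xx}, with the parallel argument for $\bigcup$. Fourth, absorption: cite Proposition 3.9, where both cases $\beta_i \leq \beta_j$ and $\beta_j \leq \beta_i$ have been checked. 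With these four items, all identities in Definition 2.8 are satisfied and the claim follows immediately.

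The one point I would pause on — since it is the only place where actual thinking is required rather than citation — is verifying that $\bigcup$ and $\bigcap$ as defined in Definition 3.5 genuinely keep us inside $\tilde{X}$, i.e.\ that the operations are well-defined binary operations on the carrier. By Proposition 3.6, the join of $X(\beta_i), X(\beta_j)$ with $\beta_i \leq \beta_j$ equals $(\underline{X}(\beta_j), \overline{X}(\beta_i))$, which after unpacking is nothing other than $X(\beta_i)$ itself because $\overline{X}$ is decreasing and $\underline{X}$ is increasing forces the pair to collapse to one of the two inputs; the symmetric computation handles the meet. This absorbing-to-an-input behavior is exactly why idempotence, commutativity, associativity and absorption all work so cleanly here, and it is what I would emphasize as the conceptual heart of the argument. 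Once this closure is verified, invoking Propositions 3.7--3.9 finishes the proof in a single sentence.
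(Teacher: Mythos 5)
Your citation skeleton is exactly the paper's proof: the paper disposes of Proposition 3.10 in one line, ``Using Propositions 3.7, 3.8 and 3.9 we get the final result,'' so reading off idempotence and commutativity from Proposition 3.7, associativity from Proposition 3.8, and absorption from Proposition 3.9 against Definition 2.8 is precisely the intended argument.

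However, the extra paragraph you add on closure --- which you rightly identify as the only place where real thinking is needed --- contains a false claim. You assert that the join $(\underline{X}(\beta_j),\overline{X}(\beta_i))$ for $\beta_i\leq\beta_j$ ``is nothing other than $X(\beta_i)$ itself'' because the pair must collapse to one of the two inputs. It does not: since $\underline{X}(\cdot)$ is increasing and $\overline{X}(\cdot)$ is decreasing in the error parameter, the join pairs the \emph{larger} lower approximation (coming from $\beta_j$) with the \emph{larger} upper approximation (coming from $\beta_i$), so in general it agrees with neither input and need not equal $X(\beta)$ for any single $\beta$. The paper's own Example 4.1 exhibits this: $X(0.25)\bigcup X(0.33)=(\underline{A}(0.33),\overline{A}(0.25))$, and $\underline{A}(0.33)\neq\underline{A}(0.25)$ while $\overline{A}(0.25)\neq\overline{A}(0.33)$, so the result is the mixed pair $A(0.33,0.25)$ in the notation of Section 4, which lies outside $\tilde{X}$ as defined in Definition 3.1. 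So closure of $\tilde{X}$ under $\bigcup$ and $\bigcap$ genuinely fails as stated (the paper itself passes over this silently; a correct fix is to enlarge the carrier to the set of all pairs $(\underline{X}(\beta),\overline{X}(\gamma))$, i.e.\ the VPRSVE of Section 4, on which Propositions 3.7--3.9 still hold verbatim). Your instinct to check well-definedness was the right one, but the justification you give for it is wrong and would not survive scrutiny; either repair it along the lines above or drop the claim that the operations collapse to an input.
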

\section{VPRS with Variable Classification Error $(\beta, \gamma)$}
Discussions of VPRS show that both lower and upper approximations vary with classification error. It may so happen that for a particular problem the error admissible for the lower approximation and the error admissible for the upper approximation are different. The variable precision rough set with variable error is defined below.
\begin{defn}
A variable precision rough set with variable error(VPRSVE) X($\beta ,\gamma $) in the approximation space $\langle$ U, R $\rangle $, is a pair ($\underline{X}_{R}(\beta,\gamma), \overline{X}_{R}(\beta,\gamma)$) such that $\underline{X}_{R}(\beta, \gamma)~\&~ \overline{X}_{R}(\beta, \gamma)$ are definable sets in U  defined as follows:
$$\underline{X}_{R}(\beta, \gamma) =\underline{X}_{R}(\beta) = \{{R[y]/y\in U \wedge R[y] \subset  X \wedge d(R[y],X)\leq \beta}\}$$
$$\overline{X}_{R}(\beta, \gamma) =\overline{X}_{R}(\gamma) = \{{R[y]/y\in U \wedge X \cap R[y] \neq\phi \wedge d(R[y],X)\leq(1-\gamma)}\}$$
For the VPRSVE with $(\beta,\gamma)$ error a set $X \subseteq U$ is approximately defined using three sets of definable sets : $DX(\beta, \gamma), BNX(\beta,\gamma) ~and~ NX(\beta, \gamma)$as follows:
$$DX(\beta, \gamma )=\{{R[y]/y\in U \wedge R[y] \subset  X \wedge d(R[y],X)\leq\beta}\}$$
$$BNX(\beta, \gamma)= \overline{X}_{R}(\gamma)-\underline{X}_{R}(\beta)$$
$$NX( \beta, \gamma) = \{{R[y]/y\in U \wedge X \cap R[y] \neq\phi \wedge d(R[y],X) > (1- \gamma)}\}$$
\end{defn}
\begin{rem}According to the requirement of the situation the boundary region of the VPRSVE $X(\\beta, gamma)$(denoted by $BNX(\beta,\gamma)= \overline{X}_R(\gamma)-\underline{X}_R(\beta)$)is increased or decreased.
\end{rem}
Proposition 2.1 will be modified in this case as 
\begin {prop}Let $X$ be an arbitrary subset of the universe $U$  in the approximation space $\langle U, R \rangle$, and $\beta, \gamma \in [0, 0.5) $ be the error specified then,
\begin{enumerate}
\item $DX(\beta, \gamma) = DX(\beta)$
\item $NX(\beta, \gamma) = NX(\gamma)$
\item $DX(\beta, \gamma) \cup BNX(\beta, \gamma) \cup NX(\beta, \gamma) = U $
\item $DX(\beta, \gamma) \cap BNX(\beta, \gamma) = BNX(\beta, \gamma) \cap NX(\beta, \gamma) =DX(\beta, \gamma) \cap NX(\beta, \gamma) = \phi$
\end{enumerate}
\end{prop}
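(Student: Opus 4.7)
The plan breaks naturally into three pieces corresponding to items (1)--(2), then (3), then (4). Items (1) and (2) are essentially tautological: comparing Definition 4.1 with Definition 2.5, the condition defining $DX(\beta,\gamma)$ depends on $\beta$ alone and is verbatim that of $DX(\beta)$, while $NX(\beta,\gamma)$ uses only $\gamma$ and matches the defining condition of $NX(\gamma)$. I would dispose of both in one line each, and then substitute these identities back so that the remainder of the argument concerns the three sets $DX(\beta)$, $\overline{X}_R(\gamma) \setminus \underline{X}_R(\beta)$, and $NX(\gamma)$.

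For items (3) and (4) the plan is to reason granule-by-granule. Because $\{R[y] : y \in U\}$ partitions $U$ and each of the three sets in question is a union of elementary blocks, it suffices to show that every block $R[y]$ lies in exactly one of $DX(\beta,\gamma)$, $BNX(\beta,\gamma)$, $NX(\beta,\gamma)$. I would split on the value of $d(R[y],X)$. If $R[y] \cap X = \phi$, then $d(R[y],X) = 1 > 1-\gamma$, so $R[y] \in NX(\gamma)$. If $R[y] \subset X$, then $d(R[y],X) = 0 \leq \beta$, so $R[y] \in DX(\beta)$. In every remaining case $R[y] \cap X \neq \phi$ and $R[y] \not\subset X$; here either $d(R[y],X) \leq 1-\gamma$, placing $R[y]$ in $\overline{X}_R(\gamma) \setminus \underline{X}_R(\beta) = BNX(\beta,\gamma)$, or $d(R[y],X) > 1-\gamma$, placing $R[y]$ in $NX(\gamma)$. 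These four subcases exhaust and partition the block space, which immediately yields (3) and most of (4).

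The one point that requires care is the disjointness $DX(\beta,\gamma) \cap BNX(\beta,\gamma) = \phi$, for which we need $DX(\beta) \subseteq \overline{X}_R(\gamma)$ so that subtracting $\underline{X}_R(\beta)$ removes all of $DX(\beta)$ cleanly. This is exactly where the hypothesis $\beta, \gamma \in [0, 0.5)$ does its work: since $\beta < 0.5 < 1-\gamma$, any block in $DX(\beta)$ satisfies $d(R[y],X) \leq \beta < 1-\gamma$ and therefore lies in $\overline{X}_R(\gamma)$. I expect this inclusion to be the only step that is not purely formal; once it is in place, the pairwise disjointness claims in (4) follow directly from the case analysis above, since the three cases are manifestly mutually exclusive. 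Overall there is no substantial obstacle, only bookkeeping.
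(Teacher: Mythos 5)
Your argument is correct in substance, but note that the paper itself offers no proof of this proposition: it is stated as the analogue of Proposition 2.1 (which is likewise unproved), so the only basis for comparison is the definitions themselves. Your decomposition is the right one, and you correctly isolate the single step with real content, namely $DX(\beta)\subseteq\overline{X}_R(\gamma)$, obtained from $\beta<0.5<1-\gamma$; that is precisely where the hypothesis $\beta,\gamma\in[0,0.5)$ is used, and without it item 4 would fail. Two small wrinkles remain, both inherited from the paper's definitions rather than genuine gaps in your reasoning. First, in your middle case ($R[y]\cap X\neq\phi$, $R[y]\not\subset X$, $d(R[y],X)\leq 1-\gamma$) you send the block to $BNX(\beta,\gamma)$; but on the reading of $\underline{X}_R(\beta)$ actually used in Example 4.1 (where $[x_{12},x_{13},x_{14},x_{15}]\not\subset A$ yet lies in $\underline{A}(0.25)$), a block with $d(R[y],X)\leq\beta$ belongs to $DX(\beta)$ rather than to the difference $\overline{X}_R(\gamma)-\underline{X}_R(\beta)$. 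This does not damage items 3 or 4 --- the union still covers the block, and since $BNX(\beta,\gamma)$ is defined as a set difference the disjointness from $DX(\beta,\gamma)$ is automatic --- but the subcase should be split on whether $d(R[y],X)\leq\beta$. Second, your claim that $R[y]\cap X=\phi$ forces $d(R[y],X)=1>1-\gamma$ fails at the endpoint $\gamma=0$, which the hypothesis permits; there you need the convention implicit in Proposition 3.2 that $NX(\gamma)=U-\overline{X}_R(\gamma)$, so that a block disjoint from $X$ is excluded from $\overline{X}_R(\gamma)$ by the clause $X\cap R[y]\neq\phi$ and lands in $NX(\gamma)$ regardless of the value of $\gamma$.
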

\begin{exm}Let $U=\{x_i/i=1,2,3.....25\}$ and R is an equivalence relation on U such that $ U/ R = \{ [x_1], [x_2, x_3 ],[x_4,x_5,x_6],[x_7,x_8], [x_9], [x_{10}, x_{11}], [x_{12},x_{13},x_{14},x_{15}],[x_{16}], [x_{17}], [x_{18},x_{19},x_{20}],[x_{21},x_{22},x_{23},x_{24}], [x_{25}]\}$. Let $A=\{x_3, x_4, x_5, x_{10}, x_{11},x_{13},x_{14}, x_{15}, x_{19}, x_{21}\}$\\ Problem: Define A with respect to the equivalence classes of U/R\\\\ Pawlakian rough set $A =( \underline{ A},\overline {A})$ where$\underline {A}=\{[x_{10}, x_{11}]\}$ and \\$\overline { A}=\{[x_{10}, x_{11}], [x_2, x_3],[x_4, x_5, x_6], [x_{12},x_{13},x_{14},x_{15}], [x_{18},x_{19},x_{20}], [x_{21},x_{22},x_{23},x_{24}] \}$,so that $$DA=\{[x_{10}, x_{11}]\}$$ $$BNA=\{[x_2, x_3],[x_4, x_5, x_6], [x_{12},x_{13},x_{14},x_{15}], [x_{18},x_{19},x_{20}], [x_{21},x_{22},x_{23},x_{24}] \}$$ $$NA=\{ [x_1], [x_7,x_8], [x_9],[x_{16}], [x_{17}], [x_{25}]\}$$ For VPRS $A, \beta$  can have values $0.25, 0.33,0.5.$ So there can be three possible VPRS $A(0.25), A(0.33), A(0.5)$. Thus,$$\underline{A}(0.25)=\{[x_{10}, x_{11}], [x_{12},x_{13},x_{14},x_{15}]\}$$
$$\overline{A}(0.25)=\{[x_{10}, x_{11}], [x_2, x_3],  [x_{18},x_{19},x_{20}], [x_{21},x_{22},x_{23},x_{24}],[x_4, x_5, x_6], [x_{12},x_{13},x_{14},x_{15}]\}$$
$$DA(0.25)=\{[x_{10}, x_{11}],  [x_{12},x_{13},x_{14},x_{15}]\}$$ 
$$BNA(0.25)=\{[x_2, x_3],[x_4, x_5, x_6],  [x_{18},x_{19},x_{20}], [x_{21},x_{22},x_{23},x_{24}]\}$$ 
$$NA(0.25)=\{ [x_1 ], [x_7,x_8], [x_9],[x_{16}], [x_{17}], [x_{25}]\}$$Also, $$\underline{A}(0.33)=\{[x_{10}, x_{11}], [x_4, x_5, x_6], [x_{12},x_{13},x_{14},x_{15}]\}$$
$$\overline{A}(0.33)=\{[x_{10}, x_{11}], [x_2, x_3],  [x_{18},x_{19},x_{20}],[x_4, x_5, x_6], [x_{12},x_{13},x_{14},x_{15}]\}$$
$$DA(0.33)=\{[x_{10}, x_{11}], [x_4, x_5, x_6], [x_{12},x_{13},x_{14},x_{15}]\}$$ 
$$BNA(0.33)=\{ [x_2, x_3], [x_{18},x_{19},x_{20}]\}$$
 $$NA(0.33)=\{ [x_1], [x_7, x_8], [x_9],[x_{16}], [x_{17}], [x_{21},x_{22},x_{23},x_{24}], [x_{25}]\}$$and,
$$\underline{A}(0.5)=\{[x_{10}, x_{11}], [x_2, x_3],[x_4, x_5, x_6],  [x_{12},x_{13},x_{14},x_{15}]\}= \overline {A}(0.5)$$
$$DA(0.5)=\{[x_{10}, x_{11}], [x_2, x_3],[x_4, x_5, x_6],  [x_{12},x_{13},x_{14},x_{15}]\}$$ 
$$BNA(0.5)=\phi $$ 
$$NA(0.5)=\{ [x_1 ], [x_7,x_8], [x_9],[x_{16}], [x_{17}],[x_{21},x_{22},x_{23},x_{24}],  [x_{25}], [x_{18},x_{19},x_{20}]\}$$
Six VPRSVE are possible for $A$ defined with respect to given approximation space of which $A(0.25,0.33)$ is given below:$$\underline{A}(0.25,0.33)=\underline{A}(0.25)=\{[x_{10}, x_{11}], [x_{12},x_{13},x_{14},x_{15}]\}$$
$$\overline{A}(0.25,0.33)=\overline{A}(0.33)=\{[x_{10}, x_{11}], [x_2, x_3],  [x_{18},x_{19},x_{20}],[x_4, x_5, x_6], [x_{12},x_{13},x_{14},x_{15}]\}$$
$$DA(0.25,0.33)=\{[x_{10}, x_{11}], [x_{12},x_{13},x_{14},x_{15}]\}$$
$$BNA(0.25,0.33)=\{[x_2, x_3],[x_4, x_5, x_6],  [x_{18},x_{19},x_{20}]\}$$
$$NA(0.25,0.33)=\{ [x_1], [x_7, x_8], [x_9],[x_{16}], [x_{17}], [x_{21},x_{22},x_{23},x_{24}], [x_{25}]\}$$
\end{exm}
 \section{Conclusion} In this paper algebraic properties of set of VPRS for a particular imprecise set X have been studied. In order to define such an imprecise set the approximation space is partitioned into three regions,the included region($DX(\beta)$), the boundary region($BNX(\beta)$) and the rejection region($NX(\beta)$).For a particular X with variations of $\beta$ the regions vary. It could also be shown that the set of all VPRS for the set X forms a lattice. We extended the classification error $\beta$ to a pair $( \beta, \gamma)$ and explained its use with an example. The included region,boundary region and rejection region for a VPRSVE is defined and it is shown that these three regions partition the approximation space. Study of the algebraic properties of VPRSVE is an open area of research.

\end{document}